\newtheorem{prop}{Proposition}
\newcommand*\iftodonotes{\if@todonotes@disabled\expandafter\@secondoftwo\else\expandafter\@firstoftwo\fi}  
\newcommand{\cutforCR}[1]{}
\crefname{section}{\S}{\S\S}
\Crefname{section}{\S}{\S\S}
\crefname{table}{Tab.}{}
\crefname{figure}{Fig.}{}
\crefname{algorithm}{Algorithm}{}
\crefname{equation}{eq.}{}
\crefname{appendix}{App.}{}
\crefname{prop}{Proposition}{}
\newcommand{\vf}{\bm{f}}
\newcommand{\vg}{\bm{g}}
\newcommand{\veta}{{\boldsymbol \eta}}
\newcommand{\vm}{\bm{m}}
\newcommand{\word}[1]{\textit{#1}}
\newcommand{\calG}{\mathcal{G}}
\newcommand{\calV}{\mathcal{V}}
\newcommand{\calS}{{\cal S}}
\newcommand{\Rpost}{R_\textit{post}}
\title{Unsupervised Discovery of Gendered Language \\through Latent-Variable Modeling}
\author
  {
	\begin{tabular}{lllll}
	Alexander Hoyle\raise1.0ex\hbox{\normalfont\normalsize \textschwa}\raise1.0ex\hbox{\normalfont \normalsize} & Lawrence Wolf-Sonkin\raise1.0ex\hbox{\normalfont\normalsize \textipa{S}}\raise1.0ex\hbox{\normalfont\normalsize} & Hanna Wallach\raise1.0ex\hbox{\normalfont\normalsize \textipa{Z}}
	\end{tabular} \\
		\begin{tabular}{lllll}
	\textbf{Isabelle Augenstein}\raise1.0ex\hbox{\normalfont\normalsize \textipa{P}} & \textbf{Ryan Cotterell}\raise1.0ex\hbox{\normalfont\normalsize \textipa{H}}
	\end{tabular}
	\\
    \raise1.0ex\hbox{\normalfont\normalsize \textschwa}Department of Computer Science, University of Maryland, College Park, USA \thanks{\quad Work undertaken while at University College London Machine Reading group.} \\
    \raise1.0ex\hbox{\normalfont\normalsize \textipa{S}}Department of Computer Science, Johns Hopkins University, Baltimore, USA \\
    \raise1.0ex\hbox{\normalfont\normalsize \textipa{Z}}Microsoft Research, New York City, USA \\
    \raise1.0ex\hbox{\normalfont\normalsize \textipa{H}}Department of Computer Science and Technology, University of Cambridge, Cambridge, UK\\
     \raise1.0ex\hbox{\normalfont\normalsize \textipa{P}}Department of Computer Science, University of Copenhagen, Copenhagen, Denmark\\
	{\tt {hoyle@umd.edu, lawrencews@jhu.edu}} \\
	{\tt {hanna@dirichlet.net,  augenstein@di.ku.dk, rdc42@cam.ac.uk}}
}
\date{}
\begin{document}
\maketitle
\begin{abstract}
Studying the ways in which language is gendered has long been an area
of interest in sociolinguistics. Studies have explored, for example,
the speech of male and female characters in film and the language used
to describe male and female politicians. In this paper, we aim not to
merely study this phenomenon qualitatively, but instead to quantify
the degree to which the language used to describe men and women is
different and, moreover, different in a positive or negative way. To
that end, we introduce a generative latent-variable model that jointly
represents adjective (or verb) choice, with its sentiment, given the
natural gender of a head (or dependent) noun. We find that there are
significant differences between descriptions of male and female nouns
and that these differences align with common gender stereotypes:
Positive adjectives used to describe women are more often related to
their bodies than adjectives used to describe men.
\end{abstract}


\section{Introduction}\label{sec:introduction}

\begin{figure}[ht!]
\centering
   \includegraphics[width=\columnwidth]{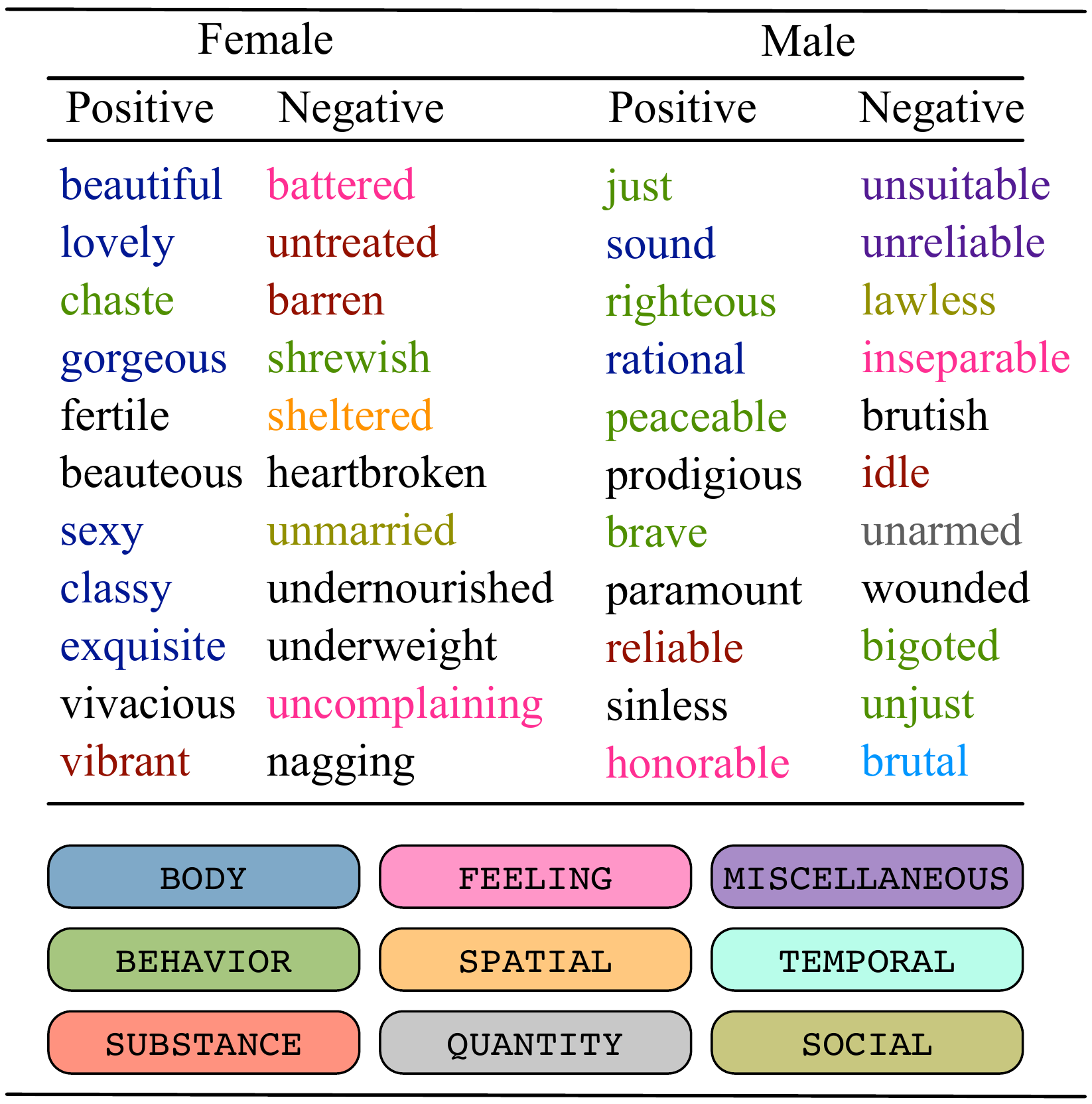}
   \caption{Adjectives, with sentiment, used to describe men and
     women, as represented by our model. Colors indicate the most
     common sense of each adjective from \newcite{TSVETKOV14.1096};
     black indicates out of lexicon. Two patterns are immediately
     apparent: positive adjectives describing women are often related
     to their bodies, while positive adjectives describing men are
     often related to their behavior. These patterns hold generally
     and the differences are significant (see
     \cref{sec:experiments}).\looseness=-1}\label{table:examples}
\end{figure}


Word choice is strongly influenced by gender---both that of the
speaker and that of the referent \cite{lakoff1973language}. Even
within 24 hours of birth, parents describe their daughters as
\textit{beautiful}, \textit{pretty}, and \textit{cute} far more often
than their sons \cite{rubin1974eye}. To date, much of the research in
sociolinguistics on gendered language has focused on laboratory
studies and smaller corpora
\cite{mckee1957differential,williams1975definition,baker2006public};
however, more recent work has begun to focus on larger-scale datasets
\cite{pearceInvestigatingCollocationalBehaviour2008,caldas-coulthardCurvyHunkyKinky2010b,baker2014using,norbergNaughtyBoysSexy2016a}. These
studies compare the adjectives (or verbs) that modify each noun in a
particular gendered pair of nouns, such as \word{boy}--\word{girl},
aggregated across a given corpus. We extend this line of work by
instead focusing on multiple noun pairs simultaneously, modeling how
the choice of adjective (or verb) depends on the natural
gender\footnote{A noun's natural gender is the implied gender of its
  referent (e.g., \textit{actress} refers to woman). We distinguish
  natural gender from grammatical gender because the latter does not
  necessarily convey anything meaningful about the referent.} of the
head (or dependent) noun, abstracting away the noun form. To that end,
we introduce a generative latent-variable model for representing
gendered language, along with sentiment, from a parsed corpus. This
model allows us to quantify differences between the language used to
describe men and women.\looseness=-1

The motivation behind our approach is straightforward: Consider the
sets of adjectives (or verbs) that attach to gendered, animate nouns,
such as \word{man} or \word{woman}. Do these sets differ in ways that
depend on gender? For example, we might expect that the adjective
\word{Baltimorean} attaches to \word{man} roughly the same number of
times as it attaches to \word{woman}, controlling for the frequency of
\word{man} and \word{woman}.\footnote{Men are written about more often
  than women. Indeed, the corpus we use exhibits this trend, as shown
  in \cref{tab:counts}.} But this is not the case for all
adjectives. The adjective \word{pregnant}, for example, almost always
describes women, modulo the rare times that men are described as being
pregnant with, say, emotion. Arguably, the gendered use of
\word{pregnant} is benign---it is not due to cultural bias that women
are more often described as pregnant, but rather because women bear
children. However, differences in the use of other adjectives (or
verbs) may be more pernicious. For example, female professors are less
often described as \word{brilliant} than male professors
\cite{storage2016frequency}, likely reflecting implicit or explicit
stereotypes about men and women.

In this paper, we therefore aim to quantify the degree to which the
language used to describe men and women is different and, moreover,
different in a positive or negative way. Concretely, we focus on three
sociolinguistic research questions about the influence of gender on
adjective and verb choice:

\begin{itemize}

\item[Q1]{What are the \emph{qualitative} differences between the
  language used to describe men and women? For example, what, if any,
  are the patterns revealed by our model? Does the output from our
  model correlate with previous human judgments of gender
  stereotypes?}

\item[Q2]{What are the \emph{quantitative} differences between the
  language used to describe men and women? For example, are adjectives
  used to describe women more often related to their bodies than
  adjectives used to describe men? Can we quantify such patterns using
  existing semantic resources \cite{TSVETKOV14.1096}?}

\item[Q3]{Does the overall \emph{sentiment} of the language used to
  describe men and women differ?}

\end{itemize}

To answer these questions, we introduce a generative latent-variable
model that jointly represents adjective (or verb) choice, with its
sentiment, given the natural gender of a head (or dependent) noun. We
use a form of posterior regularization to guide inference of the
latent variables \cite{ganchev2010posterior}. We then use this model
to study the syntactic $n$-gram corpus of
\cite{goldberg-orwant:2013:*SEM}.\looseness=-1

To answer Q1, we conduct an analysis that reveals differences between
descriptions of male and female nouns that align with common gender
stereotypes captured by previous human judgements. When using our
model to answer Q2, we find that adjectives used to describe women are
more often related to their bodies (significant under a permutation
test with $p < 0.03$) than adjectives used to describe men (see
\cref{table:examples} for examples). This finding accords with
previous research \cite{norbergNaughtyBoysSexy2016a}. Finally, in
answer to Q3, we find no significant difference in the overall
sentiment of the language used to describe men and women.

\begin{table}
\fontsize{10}{10}\selectfont
\centering
  \begin{tabular}{lrlr}
    \toprule
    \multicolumn{2}{c}{Female} &  \multicolumn{2}{c}{Male}\\
    \midrule
    \textit{other} &  2.2 & \textit{other} &   6.8 \\
    daughter       &  1.4 &    husband     &   1.8 \\
    lady           &  2.4 &       king     &   2.1 \\
    wife           &  3.3 &        son     &   2.9 \\
    mother         &  4.2 &     father     &   4.2 \\
    girl           &  5.1 &        boy     &   5.1 \\
    woman          & 11.5 &        man     &  39.9 \\
    \midrule
    Total          & 30.2 &                &  62.7 \\
    \bottomrule
  \end{tabular}
  \caption{Counts, in millions, of male and female nouns present in
    the corpus of
    \newcite{goldberg-orwant:2013:*SEM}.\looseness=-1}\label{tab:counts}
\end{table}

\section{What Makes this Study Different?}\label{sec:data}

As explained in the previous section, many sociolinguistics
researchers have undertaken corpus-based studies of gendered
language. In this section, we therefore differentiate our approach
from these studies and from recent NLP research on gender biases in
word embeddings and co-reference systems.\looseness=-1

\paragraph{Syntactic collocations and noun types.}
Following the methodology employed in previous sociolinguistic studies
of gendered language, we use syntactic collocations to make definitive
claims about gendered relationships between words. This approach
stands in contrast to bag-of-words analyses, where information about
gendered relationships must be indirectly inferred. By studying the
adjectives and verbs that attach to gendered, animate nouns, we are
able to more precisely quantify the degree to which the language used
to describe men and women is different. To date, much of the
corpus-based sociolinguistics research on gendered language has
focused on differences between the adjectives (or verbs) that modify
each noun in a particular gendered pair of nouns, such as
\word{boy}--\word{girl} or \word{man}--\word{woman} (e.g.,
\newcite{pearceInvestigatingCollocationalBehaviour2008,caldas-coulthardCurvyHunkyKinky2010b,norbergNaughtyBoysSexy2016a}). To
assess the differences, researchers typically report top
collocates\footnote{Typically ranked by the log of the Dice
  coefficient.} for one word in the pair,
exclusive of collocates for the other. This approach has the effect of
restricting both the amount of available data and the claims that can
be made regarding gendered nouns more broadly. In contrast, we focus
on multiple noun pairs (including plural forms) simultaneously,
modeling how the choice of adjective (or verb) depends on the natural
gender of the head (or dependent) noun, abstracting away the noun
form. As a result, we are able to make broader claims.

\paragraph{The corpus of \newcite{goldberg-orwant:2013:*SEM}.}
To extract the adjectives and verbs that attach to gendered, animate
nouns, we use the corpus of \newcite{goldberg-orwant:2013:*SEM}, who
ran a then-state-of-the-art dependency parser on 3.5 million
digitalized books. We believe that the size of this corpus (11 billion
words) makes our study the largest collocational study of its
kind. Previous studies have used corpora of under one billion words,
such as the British National Corpus (100 million words)
\cite{pearceInvestigatingCollocationalBehaviour2008}, the New Model
Corpus (100 million words) \cite{norbergNaughtyBoysSexy2016a}, and the
Bank of English Corpus (450 million words)
\cite{moonrosamundGorgeousGrumpyAdjectives2014}. By default, the
corpus of \newcite{goldberg-orwant:2013:*SEM} is broken down by year,
but we aggregate the data across years to obtain roughly 37 million
noun--adjectives pairs, 41 million \textsc{nsubj}--verb pairs, and 14
million \textsc{dobj}--verb pairs. We additionally lemmatize each
word. For example, the noun \word{stewardesses} is lemmatized to a set
of lexical features consisting of the genderless lemma
\textsc{steward} and the morphological features $+\textsc{fem}$ and
$+\textsc{pl}$. This parsing and lemmatization process is illustrated
in \cref{fig:dependency-example}.\looseness=-1

\begin{figure}
\centering
    \includegraphics[width=\columnwidth]{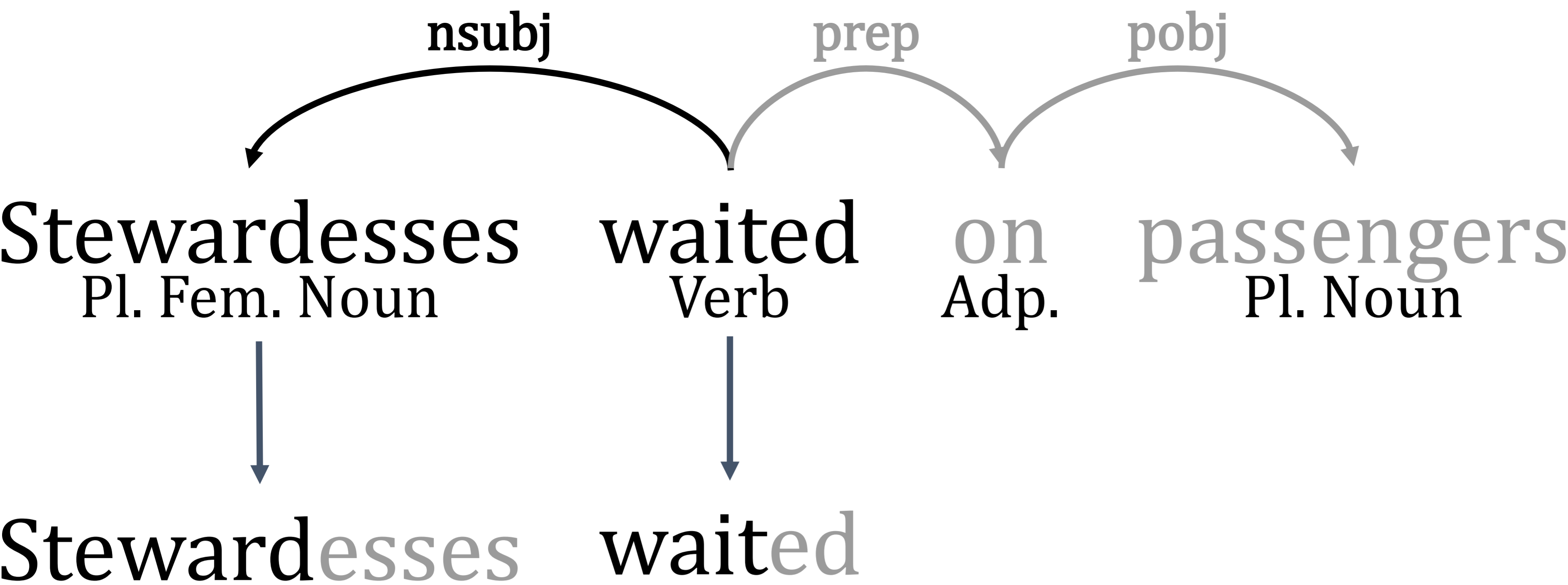}
    \caption{An example sentence with its labeled dependency parse (top) and lemmatized words (bottom).}
    \label{fig:dependency-example}
\end{figure}

\paragraph{Quantitative evaluation.}
Our study is also quantitative in nature: we test concrete hypotheses
about differences between the language used to describe men and
women. For example, we test whether women are more often described
using adjectives related to their bodies and emotions. This
quantitative focus differentiates our approach from previous
corpus-based sociolinguistics research on gendered language. Indeed,
in the introduction to a special issue on corpus methods in the
journal \word{Gender and Language}, \newcite{baker2013introduction}
writes, ``while the term corpus and its plural corpora are reasonably
popular within Gender and Language (occurring in almost 40\% of
articles from issues 1-6), authors have mainly used the term as a
synonym for `data set' and have tended to carry out their analysis by
hand and eye methods alone.'' Moreover, in a related paper on
extracting gendered language from word embeddings,
\newcite{garg2018word} lament that ``due to the relative lack of
systematic quantification of stereotypes in the literature [... they]
cannot directly validate [their] results.'' For an overview of
quantitative evaluation, we recommend
\newcite{baker2014using}.\looseness=-1

\paragraph{Speaker versus referent.}
Many data-driven studies of gender and language focus on what speakers
of different genders say rather than differences between descriptions
of men and women. This is an easier task---the only annotation
required is the gender of the speaker. For example,
\newcite{ott2016tweet} used a topic model to study how word choice in
tweets is influenced by the gender of the tweeter;
\newcite{schofield-mehr:2016:CLfL2016} modeled gender in film dialog;
and, in the realm of social media analysis, \newcite{Bamman2014}
discussed stylistic choices that enable classifiers to distinguish
between tweets written by men versus women.\looseness=-1

\paragraph{Model versus data.}
Recent NLP research has focused on gender biases in word embeddings
\cite{bolukbasi2016man,zhao-EtAl:2017:EMNLP20173} and co-reference
systems \cite{N18-2003,N18-1067}. These papers are primarily concerned
with mitigating biases present in the output of machine learning
models deployed in the real world \cite{o2016weapons}. For example,
\newcite{bolukbasi2016man} used pairs of gendered words, such as
\word{she}--\word{he}, to mitigate unwanted gender biases in word
embeddings. Although it is possible to rank the adjectives (or verbs)
most aligned with the embedding subspace defined by a pair of gendered
words, there are no guarantees that the resulting adjectives (or
verbs) were specifically used to describe men or women in the dataset
from which the embeddings were learned. In contrast, we use syntactic
collocations to explicitly represent gendered relationships between
individual words. As a result, we are able make definitive claims
about these relationships, thereby enabling us to answer
sociolinguistic research questions. Indeed, it is this sociolinguistic
focus that differentiates our approach from this line of work.

\section{Modeling Gendered Language}\label{sec:model}

As explained in \cref{sec:introduction}, our aim is quantify the
degree to which the language used to describe men and women is
different and, moreover, different in a positive or negative way. To
do this, we therefore introduce a generative latent-variable model
that jointly represents adjective (or verb) choice, with its
sentiment, given the natural gender of a head (or dependent)
noun. This model, which is based on the sparse additive generative
model
\cite[SAGE;][]{eisensteinSparseAdditiveGenerative2011},\footnote{SAGE
  is a flexible alternative to latent Dirichlet allocation
  \cite[LDA;][]{blei2003latent}---the most widely used statistical
  topic model. Our study could also have been conducted using LDA;
  drawing on SAGE was primarily a matter of personal taste.} enables
us to extract ranked lists of adjectives (or verbs) that are used,
with particular sentiments, to describe male or female nouns.

We define $\mathcal{G}$ to be the set of gendered, animate nouns in
our corpus and $n \in \mathcal{G}$ to be one such noun. We represent
$n$ via a multi-hot vector $\vf_n \in \{0,1\}^T$ of its lexical
features---i.e., its genderless lemma, its gender (male or female),
and its number (singular or plural). In other words, $\vf_n$ always
has exactly three non-zero entries; for example, the only non-zero
entries of $\vf_{\word{stewardesses}}$ are those corresponding to
$\textsc{steward}$, $+\textsc{fem}$, and $+\textsc{pl}$. We define
$\mathcal{V}$ to be the set of adjectives (or verbs) in our corpus and
$\nu \in \mathcal{V}$ to be one such adjective (or verb). To simplify
exposition, we refer to each adjective (or verb) that attaches to noun
$n$ as a \emph{neighbor} of $n$. Finally, we define $\mathcal{S} =
\{\textsc{pos}, \textsc{neg}, \textsc{neu}\}$ to be a set of three
sentiments and $s \in \mathcal{S}$ to be one such
sentiment.\looseness=-1

\begin{figure}
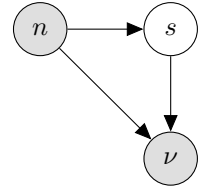

    \centering
    \tikz{ %
        \node[obs] (N) {$n$} ;
        \node[latent,  right=of N] (S) {$s$} ;
        \node[obs, below=of S] (R) {$\nu$} ;
        \edge {N} {S};
        \edge {N} {R};
        \edge {S} {R};
    }
    \caption{Graphical model depicting our model's representation of
      nouns, neighbors, and (latent) sentiments.}
    \label{fig:joint_sentiment}
\end{figure}

Drawing inspiration from SAGE, our model jointly represents nouns,
neighbors, and (latent) sentiments as depicted in
\cref{fig:joint_sentiment}. Specifically,
\begin{equation}
  \label{eqn:joint}
  p(\nu, n, s) = p(\nu \,|\, s, n)\,p(s \,|\, n)\,p(n).
\end{equation}
The first factor in \cref{eqn:joint} is defined as
\begin{equation}
p(\nu \,|\, s, n) \propto \exp\{m_\nu + \vf_n^{\top} \veta(\nu, s)\},
\end{equation}
where $\vm \in \mathbb{R}^{|\mathcal{V}|}$ is a background
distribution and $\veta(\nu, s) \in \mathbb{R}^T$ is a neighbor- and
sentiment-specific deviation. The second factor in \cref{eqn:joint} is
defined as\looseness=-1
\begin{equation}
  p(s \,|\, n) \propto \exp{(\omega_s^n)},
\end{equation}
where $\omega_s^n \in \mathbb{R}$, while the third factor is defined as
\begin{equation}
  p(n) \propto \exp{(\xi_n)},
\end{equation}
where $\xi_n \in \mathbb{R}$. We can then extract lists of neighbors that are used, with particular sentiments, to describe male and female nouns, ranked by scores that are a function of their deviations. For example,  the score for neighbor $\nu$ when used, with positive sentiment, to describe a male noun is defined as
\begin{equation}
  \label{eqn:topic}
  \tau_\textsc{masc-pos}(\nu) \propto \exp\{\vg_\textsc{masc}^{\top} \veta(\nu, \textsc{pos})\},
\end{equation}
where $\vg_\textsc{masc} \in \{0, 1\}^T$ is a vector where only the
entry that corresponds to $+\textsc{masc}$ is non-zero.\looseness=-1

Because our corpus does not contain explicit sentiment information, we
marginalize out $s$:
\begin{equation}
p(\nu, n) = \sum_{s \in \mathcal{S}} p(\nu \,|\, s, n)\, p(s \,|\, n)\, p(n).
\end{equation}
This yields the following objective function:
\begin{equation}
  \label{eq:obj}
  \sum_{n \in \mathcal{G}} \sum_{\nu \in \mathcal{V}} \hat{p}(\nu, n) \log{(p(\nu, n))},
\end{equation}
where $\hat{p}(\nu, n) \propto \#(\nu, n)$ is the empirical
probability of neighbor $\nu$ and noun $n$ in our corpus.

To ensure that the latent variables in our model correspond to
positive, negative, and neutral sentiments, we rely on posterior
regularization \cite{ganchev2010posterior}. Given an additional
distribution $q(s \,|\, \nu)$ that provides external information about
the sentiment of neighbor $\nu$, we regularize $p(s \,|\, \nu)$, as
defined by our model, to be close (in the sense of KL-divergence) to
$q(s \,|\, \nu)$. Specifically, we construct the following posterior
regularizer:
\begin{align}\label{eq:reg}
  &\Rpost \notag\\
  &\quad =\text{KL}( q(s \,|\, \nu) \,||\, p(s \,|\, \nu)) \\
  &\quad = -\sum_{s \in \calS} q (s \,|\, \nu) \log{( p(s \,|\, \nu))} + H(q),
\end{align}
where $H(q)$ is constant and $p(s \,|\, \nu)$ is defined as
\begin{align}
  p(s \,|\, \nu) &= \sum_{n \in \calG} p(s, n \,|\, \nu) \\
  &= \sum_{n \in
    \calG} \frac{p(\nu \,|\, n, s)\, p(s \,|\, n)\,p(n)}{p(\nu)}.
\end{align}
We use the combined sentiment lexicon of \newcite{hoyleSentiment2019}
as $q(s \,|\, \nu)$. This lexicon represents each word's sentiment as
a three-dimensional Dirichlet distribution, thereby accounting for the
relative confidence in the strength of each sentiment and, in turn,
accommodating polysemous and rare words. By using the lexicon as
external information in our posterior regularizer, we can control the
extent to which it influences the latent variables.

We add the regularizer in \cref{eq:reg} to the objective function in
\cref{eq:obj}, using a multiplier $\beta$ to control the strength of
the posterior regularization. We also impose an $L_1$-regularizer
$\alpha \cdot ||\veta||_1$ to induce sparsity. The complete objective
function is then
\begin{align}
 &\sum_{n \in \mathcal{G}} \sum_{\nu\in \mathcal{V}} \hat{p}(\nu, n) \log{(p(\nu, n))}\notag\\
& \quad + \alpha \cdot ||\veta||_1 + \beta \cdot \Rpost.  \label{eq:objective}
\end{align}
We optimize \cref{eq:objective} with respect to $\veta(\cdot, \cdot)$,
${\boldsymbol \omega}$, and ${\boldsymbol \xi}$ using the Adam
optimizer \cite{Kingma2014AdamAM} with $\alpha$ and $\beta$ set as
described in \cref{sec:experiments}. To ensure that the parameters are
interpretable (e.g., to avoid a negative
$\veta(\textsc{pregnant},\textsc{neg})$ canceling out a positive
$\veta(\textsc{pregnant},\textsc{pos})$)), we also constrain
$\veta(\cdot,\cdot)$ to be non-negative, although without this
constraint, our results are largely the same.

\paragraph{Relationship to pointwise mutual information.}
Our model also recovers pointwise mutual information (PMI),
which has been used previously to identify gendered
language~\cite{rudinger2017social}.\looseness=-1
\begin{prop}\label{thm:prop1}
  Consider
the following restricted version of our model.
Let $\vf_{g} \in \{0, 1\}^2$ be a one-hot vector that represents only
the gender of a noun $n$.
We write $g$ instead of $n$, equivalence-classing all nouns as either \textsc{masc} or \textsc{fem}.
Let $\veta^\star(\cdot) : \calV \rightarrow \mathbb{R}^2$
be the maximum-likelihood estimate for the special case of our model
without (latent) sentiments:
\begin{equation}
    p(\nu \mid g) \propto \exp(m_\nu + \vf_g^{\top}\veta^\star(\nu)).
\end{equation}
Then, we have
\begin{align}
   \tau_g(\nu) \propto \exp(\textrm{PMI}(\nu, g)).
\end{align}
\end{prop}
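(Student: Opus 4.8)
The plan is to show that, in this restricted model, the maximum-likelihood parameter $\eta^\star_g(\nu)$ (the $g$-coordinate of $\veta^\star(\nu)$) equals the empirical $\textrm{PMI}(\nu, g)$ up to a constant that depends only on $g$. Exponentiating, as $\tau_g$ does, then recovers $\exp(\textrm{PMI}(\nu, g))$ once the proportionality absorbs that constant.

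First I would note that the restricted model is \emph{saturated}. Since $\vf_g$ is one-hot in $\{0,1\}^2$, the inner product $\vf_g^{\top}\veta^\star(\nu)$ simply selects the $g$-th coordinate, which I write $\eta^\star_g(\nu)$; the model therefore splits into two independent softmaxes,
\[
  p(\nu \mid g) \propto \exp\!\big(m_\nu + \eta^\star_g(\nu)\big),
\]
one for each $g \in \{\textsc{masc}, \textsc{fem}\}$, with a free parameter for every $(\nu, g)$ pair. Dropping the sentiment sum, the objective in \cref{eq:obj} decomposes over $g$ into $\hat{p}(g)$-weighted cross-entropies between $\hat{p}(\nu \mid g)$ and $p(\nu \mid g)$, while the marginal factor $p(g)$ separates out and is optimized independently through $\vxi$. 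By Gibbs' inequality each cross-entropy term is maximized precisely when the model matches the data, so at the optimum $p(\nu \mid g) = \hat{p}(\nu \mid g)$.

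Next I would invert the softmax. Taking logs of $p(\nu \mid g) = \hat{p}(\nu \mid g)$ gives
\[
  \eta^\star_g(\nu) = \log \hat{p}(\nu \mid g) - m_\nu + C_g,
\]
with $C_g$ the log normalizer, constant in $\nu$. Here I would invoke the SAGE convention that the background $\vm$ is fixed (not optimized) to the log empirical marginal, $m_\nu = \log \hat{p}(\nu)$ up to an additive constant. Substituting,
\[
  \eta^\star_g(\nu) = \log \frac{\hat{p}(\nu \mid g)}{\hat{p}(\nu)} + C_g = \textrm{PMI}(\nu, g) + C_g.
\]
Since $\tau_g(\nu) \propto \exp\!\big(\vg_g^{\top}\veta^\star(\nu)\big) = \exp\!\big(\eta^\star_g(\nu)\big)$ and $C_g$ is independent of $\nu$, the claim $\tau_g(\nu) \propto \exp(\textrm{PMI}(\nu, g))$ follows immediately.

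I expect the main obstacle to be making the saturation step watertight: I must justify that the over-parameterized softmax actually \emph{attains} its likelihood maximum at the empirical conditional (rather than merely approaching it), and note that $\veta^\star$ is identified only up to the per-gender shift $C_g$. Equally important is flagging that the reduction to a \emph{genuine} PMI relies on the background being pinned to the empirical marginal; were $\vm$ free or set otherwise, $\eta^\star_g(\nu)$ would still equal $\log \hat{p}(\nu \mid g)$ shifted by a $\nu$-dependent term, and the PMI interpretation would break. I would therefore state this background assumption explicitly as part of the restricted version of the model.
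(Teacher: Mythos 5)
Your proof is correct and follows essentially the same route as the paper's: both rest on the saturation argument (the over-parameterized softmax attains $p(\nu \mid g) = \hat{p}(\nu \mid g)$ at the MLE) and on the SAGE convention $m_\nu = \log \hat{p}(\nu)$, differing only in that you solve the softmax for $\eta^\star_g(\nu)$ while the paper expands $\textrm{PMI}(\nu, g)$ and substitutes the model form---the same chain of identities read in opposite directions. Your explicit flagging of the background assumption is a genuine improvement in rigor, since the paper invokes it silently in the unannotated step $\hat{p}(\nu) = \exp\{m_\nu\}$, as is your caveat about attainment of the maximum when some empirical probabilities vanish.
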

\begin{proof}
See \cref{sec:pmi}.
\end{proof}
\cref{thm:prop1} says that if we use a limited set of lexical features
(i.e., only gender) and estimate our model \emph{without} any
regularization or latent sentiments, then ranking the neighbors by
$\tau_g(\nu)$ (i.e., by their deviations from the background
distribution) is equivalent to ranking them by their PMI. This
proposition therefore provides insight into how our model builds on
PMI. Specifically, in contrast to PMI, 1) our model can consider
lexical features other than gender, 2) our model is regularized to
avoid the pitfalls of maximum-likelihood estimation, and 3) our model
cleanly incorporates latent sentiments, relying on posterior
regularization to ensure that the $p(s \,|\, \nu)$ is close to the
sentiment lexicon of \newcite{hoyleSentiment2019}.


\begin{table*}
  \centering
  \begin{adjustbox}{width=\textwidth}
  \begin{tabular}{lrlrlr|lrlrlr}
  \toprule
  \multicolumn{2}{c}{$\tau_\textsc{masc-pos}$}&  \multicolumn{2}{c}{$\tau_\textsc{masc-neg}$}& \multicolumn{2}{c|}{$\tau_\textsc{masc-neu}$}& \multicolumn{2}{c}{$\tau_\textsc{fem-pos}$}&  \multicolumn{2}{c}{$\tau_\textsc{fem-neg}$}& \multicolumn{2}{c}{$\tau_\textsc{fem-neu}$} \\
  Adj. &  Value & Adj. & Value & Adj. & Value & Adj. &  Value & Adj. & Value & Adj. & Value\\
  \midrule
      faithful &   2.3 &         unjust &   2.4 &          german &   1.9 &        pretty &  3.3 &      horrible &  1.8 &        virgin &  2.8 \\
   responsible &   2.2 &           dumb &   2.3 &        teutonic &   0.8 &          fair &  3.3 &   destructive &  0.8 &       alleged &  2.0 \\
   adventurous &   1.9 &        violent &   1.8 &       financial &   2.6 &     beautiful &  3.4 &     notorious &  2.6 &        maiden &  2.8 \\
         grand &   2.6 &           weak &   2.0 &          feudal &   2.2 &        lovely &  3.4 &        dreary &  0.8 &       russian &  1.9 \\
        worthy &   2.2 &           evil &   1.9 &           later &   1.6 &      charming &  3.1 &          ugly &  3.2 &          fair &  2.6 \\
         brave &   2.1 &         stupid &   1.6 &        austrian &   1.2 &         sweet &  2.7 &         weird &  3.0 &       widowed &  2.4 \\
          good &   2.3 &          petty &   2.4 &       feudatory &   1.8 &         grand &  2.6 &       harried &  2.4 &         grand &  2.1 \\
        normal &   1.9 &         brutal &   2.4 &        maternal &   1.6 &       stately &  3.8 &      diabetic &  1.2 &     byzantine &  2.6 \\
     ambitious &   1.6 &         wicked &   2.1 &        bavarian &   1.5 &    attractive &  3.3 &  discontented &  0.5 &   fashionable &  2.5 \\
       gallant &   2.8 &     rebellious &   2.1 &           negro &   1.5 &        chaste &  3.3 &      infected &  2.8 &          aged &  1.8 \\
        mighty &   2.4 &            bad &   1.9 &        paternal &   1.4 &      virtuous &  2.7 &     unmarried &  2.8 &       topless &  3.9 \\
         loyal &   2.1 &      worthless &   1.6 &        frankish &   1.8 &       fertile &  3.2 &       unequal &  2.4 &      withered &  2.9 \\
       valiant &   2.8 &        hostile &   1.9 &           welsh &   1.7 &    delightful &  2.9 &       widowed &  2.4 &      colonial &  2.8 \\
     courteous &   2.6 &       careless &   1.6 &  ecclesiastical &   1.6 &        gentle &  2.6 &       unhappy &  2.4 &      diabetic &  0.7 \\
      powerful &   2.3 &         unsung &   2.4 &           rural &   1.4 &    privileged &  1.4 &        horrid &  2.2 &     burlesque &  2.9 \\
      rational &   2.1 &        abusive &   1.5 &         persian &   1.4 &      romantic &  3.1 &       pitiful &  0.8 &        blonde &  2.9 \\
       supreme &   1.9 &      financial &   3.6 &          belted &   1.4 &     enchanted &  3.0 &     frightful &  0.5 &      parisian &  2.7 \\
   meritorious &   1.5 &         feudal &   2.5 &           swiss &   1.3 &        kindly &  3.2 &    artificial &  3.2 &          clad &  2.5 \\
        serene &   1.4 &          false &   2.3 &         finnish &   1.1 &       elegant &  2.8 &        sullen &  3.1 &        female &  2.3 \\
       godlike &   2.3 &         feeble &   1.9 &        national &   2.2 &          dear &  2.2 &    hysterical &  2.8 &      oriental &  2.2 \\
         noble &   2.3 &       impotent &   1.7 &        priestly &   1.8 &       devoted &  2.0 &         awful &  2.6 &       ancient &  1.7 \\
      rightful &   1.9 &      dishonest &   1.6 &     merovingian &   1.6 &     beauteous &  3.9 &       haughty &  2.6 &      feminist &  2.9 \\
         eager &   1.9 &     ungrateful &   1.5 &        capetian &   1.4 &     sprightly &  3.2 &      terrible &  2.4 &      matronly &  2.6 \\
     financial &   3.3 &     unfaithful &   2.6 &        prussian &   1.4 &       beloved &  2.5 &        damned &  2.4 &        pretty &  2.5 \\
    chivalrous &   2.6 &    incompetent &   1.7 &          racial &   0.9 &      pleasant &  1.8 &       topless &  3.5 &       asiatic &  2.0 \\
 \bottomrule
  \end{tabular}
  \end{adjustbox}
  \caption{For each sentiment, we provide the largest-deviation
    adjectives used to describe male and female
    nouns.}\label{tab:adj-results}
\end{table*}




\section{Experiments, Results, and Discussion}\label{sec:experiments}

We use our model to study the corpus of
\newcite{goldberg-orwant:2013:*SEM} by running it separately on the
noun--adjectives pairs, the \textsc{nsubj}--verb pairs, and the
\textsc{dobj}--verb pairs. We provide a full list of the lemmatized,
gendered, animate nouns in \cref{app:gendered-nouns}. We use $\alpha
\in \{0, 10^{-5}, 10^{-4}, 0.001, 0.01 \}$ and $\beta \in \{10^{-5},
10^{-4}, 0.001, 0.01, 0.1, 1, 10, 100\}$; when we report results
below, we use parameter values averaged over these hyperparameter
settings.


\subsection{Q1: \emph{Qualitative} Differences}

Our first research question concerns the qualitative differences
between the language used to describe men and women. To answer this
question, we use our model to extract ranked lists of neighbors that
are used, with particular sentiments, to describe male and female
nouns. As explained in \cref{sec:model}, we rank the neighbors by
their deviations from the background distribution (see, for example,
\cref{eqn:topic}).

\paragraph{Qualitative evaluation.}

In \cref{tab:adj-results}, we provide, for each sentiment, the 25
largest-deviation adjectives used to describe male and female
nouns. The results are striking: it is immediately apparent that
positive adjectives describing women are often related to their
appearance (e.g., \word{beautiful}, \word{fair}, and
\word{pretty}). Sociolinguistic studies of other corpora, such as
British newspapers \cite{caldas-coulthardCurvyHunkyKinky2010b}, have
also revealed this pattern. Adjectives relating to fertility, such as
\word{fertile} and \word{barren}, are also more prevalent for
women. We provide similar tables for verbs in \cref{sec:output}.
Negative verbs describing men are often related to violence (e.g.,
\word{murder}, \word{fight}, \word{kill}, and
\word{threaten}). Meanwhile, women are almost always the object of
\word{rape}, which aligns with our knowledge of the world and supports
the collocation of \word{rape} and \word{girl} found by
\newcite{baker2014using}. Broadly speaking, positive verbs describing
men tend to connote virtuosity (e.g., \word{gallant} and
\word{inspire}), while those describing women appear more trivial
(e.g., \word{sprightly}, \word{giggle}, and
\word{kiss}). 

\paragraph{Correlation with human judgments.} 
To determine whether the output from our model accords with previous
human judgements of gender stereotypes, we use the corpus of
\newcite{williams1975definition}, which consists of 63 adjectives
annotated with (binary) gender stereotypes. We measure Spearman's
$\rho$ between these annotations and the probabilities output by our
model. We find a relatively strong positive correlation of $\rho=0.59$
($p < 10^{-6}$), which indicates that the output from our model aligns
with common gender stereotypes captured by previous human
judgements. We also measure the correlation between continuous
annotations of 300 adjectives from two follow-up studies
\cite{williams+best,williamsSexStereotypesTrait1977}\footnote{The
  studies consider the same set of words 20 years apart; we average
  their annotations, obtained from \newcite{garg2018word}.}  and the
probabilities output by our model. Here, the correlation is
$\rho=0.33$ ($p < 10^{-8}$), and the binarized annotations agree with
the output from our model for $64\%$ of terms. We note that some of
the disagreement is due to reporting bias
\cite{gordonReportingBiasKnowledge2013} in our corpus. For example,
only men are described in our corpus as \word{effeminate}, although
humans judge it to be a highly feminine adjective.



\subsection{Q2: \emph{Quantitative} differences}\label{sec:q2}

Our second research question concerns the quantitative differences
between the language used to describe men and women. To answer this
question, we use two existing semantic resources---one for adjectives
\cite{TSVETKOV14.1096} and one for verbs
\cite{miller1993semantic}---to quantify the patterns revealed by our
model. Again, we use our model to extract ranked lists of neighbors
that are used, with particular sentiments, to describe male and female
nouns. We consider only the 200 largest-deviation neighbors for each
sentiment and gender. This restriction allows us to perform an
unpaired permutation test \cite{good2004permutation} to determine
whether there are significant differences between the language used to
describe men and women.\looseness=-1

\paragraph{Adjective evaluation.}

Women are supposedly more often described using adjectives related to
their bodies and emotions. For example, \newcite{de1953second} writes
that ``from girlhood, women are socialized to live and experience
their bodies as objects for another's gaze...'' Although studies of
reasonably large corpora have found evidence to support this
supposition \cite{norbergNaughtyBoysSexy2016a}, none have done so at
scale with statistical significance testing. We use the semantic
resource of \newcite{TSVETKOV14.1096}, which categorizes adjectives
into thirteen senses: \textsc{behavior}, \textsc{body},
\textsc{feeling}, \textsc{mind}, etc. Specifically, each adjective has
a distribution over senses, capturing how often the adjective
corresponds to each sense. We analyze the largest-deviation adjectives
for each sentiment and gender by computing the frequency with which
these adjectives correspond to each sense. We depict these frequencies
in \cref{fig:adjective-results}. Specifically, we provide frequencies
for the senses where, after Bonferroni correction, the differences
between men and women are significant. We find that adjectives used to
describe women are indeed more often related to their bodies and
emotions than adjectives used to describe men.\looseness=-1

\begin{figure}
\centering
\begin{adjustbox}{width=1.0\columnwidth}
\includegraphics{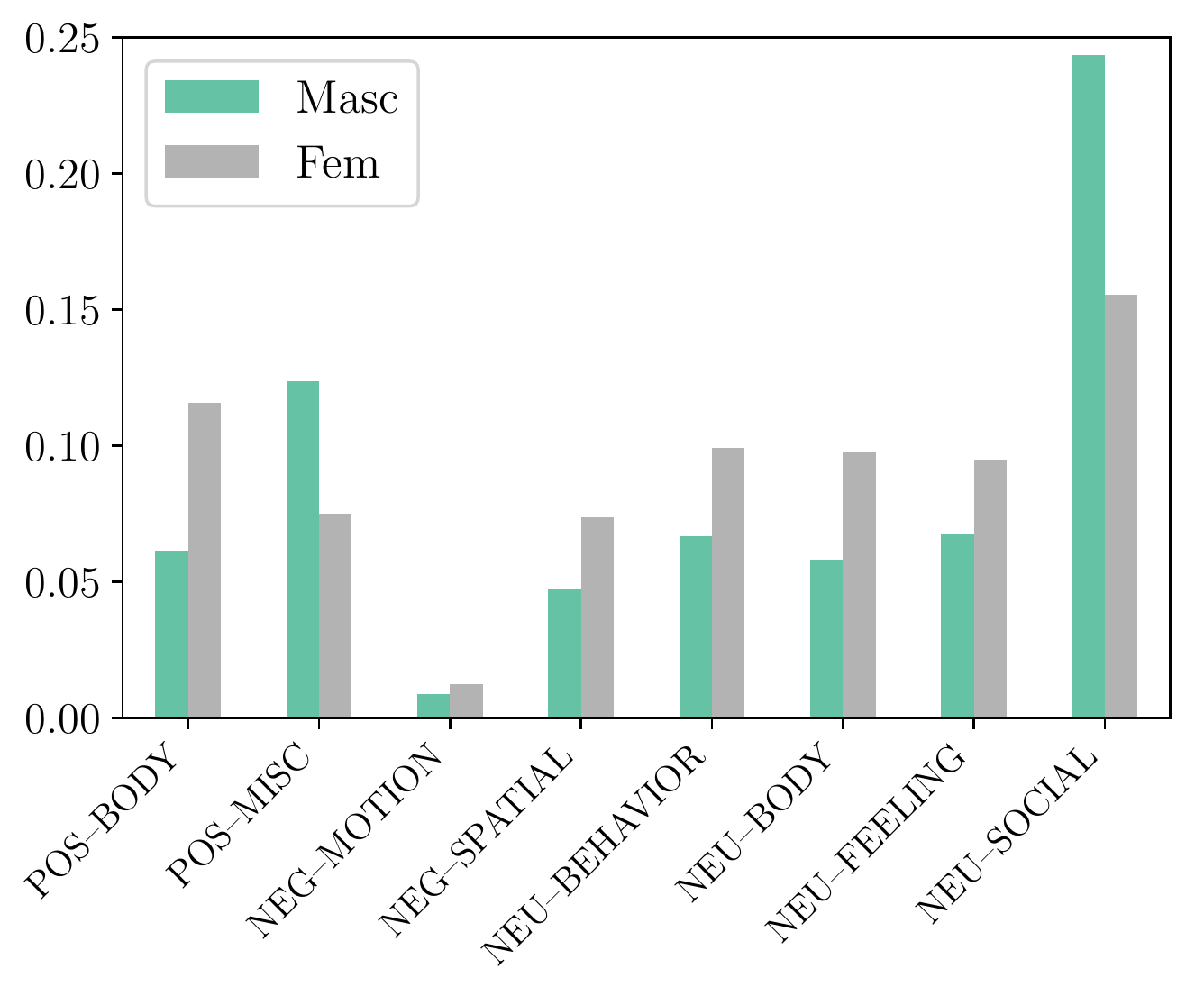}
\end{adjustbox}
\caption{The frequency with which the 200 largest-deviation adjectives
  for each sentiment and gender correspond to each sense from
  \newcite{TSVETKOV14.1096}.}
\label{fig:adjective-results}
\end{figure}

\paragraph{Verb evaluation.}

To evalaute verbs senses, we take the same approach as for
adjectives. We use the semantic resource of
\newcite{miller1993semantic}, which categorizes verbs into fifteen
senses. Each verb has a distribution over senses, capturing how often
the verb corresponds to each sense. We consider two cases: the
\textsc{nsubj}--verb pairs and the \textsc{dobj}--verb pairs. Overall,
there are fewer significant differences for verbs than there are for
adjectives. There are no statistically significant differences for the
\textsc{dobj}--verb pairs. We depict the results for the
\textsc{nsubj}--verb pairs in \cref{fig:verb-results}. We find that
verbs used to describe women are more often related to their bodies
than verbs used to describe men.

\begin{figure}
\centering
\begin{adjustbox}{width=1.0\columnwidth}
\includegraphics{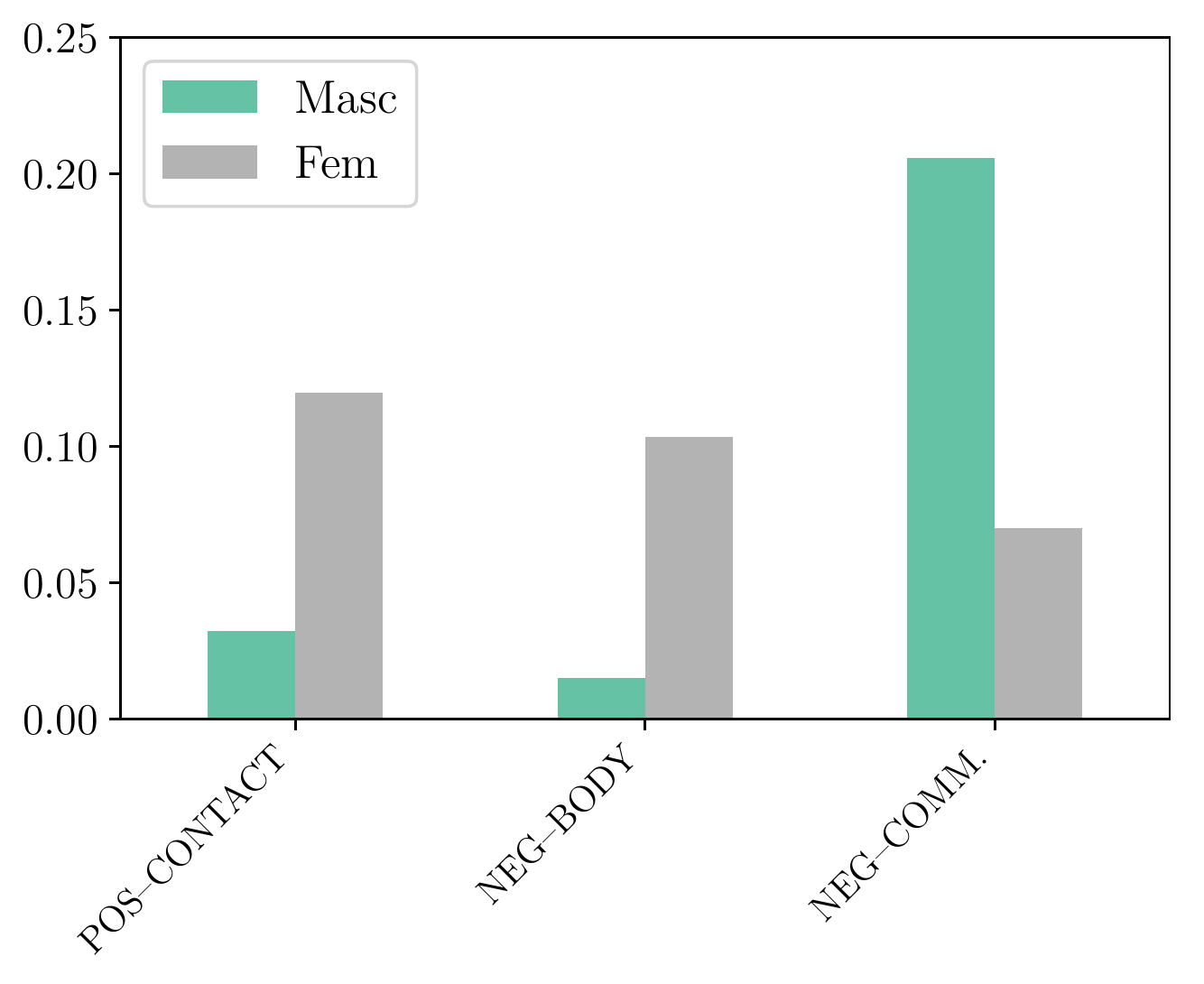}
\end{adjustbox}
\caption{The frequency with which the 200 largest-deviation verbs for
  each sentiment and gender correspond to each sense from
  \newcite{miller1993semantic}. These results are only for the
  \textsc{nsubj}--verb pairs; there are no statistically significant
  differences for \textsc{dobj}--verb pairs.}\label{fig:verb-results}
\end{figure}


\begin{table}
\fontsize{10}{10}\selectfont
    \centering
  \begin{adjustbox}{width=\columnwidth}
	\begin{tabular}{lllllll} \toprule
    	&            \multicolumn{2}{c}{\textsc{adj}} &  \multicolumn{2}{c}{\textsc{nsubj}} &  \multicolumn{2}{c}{\textsc{dobj}}	\\ \cmidrule(l){2-3} \cmidrule(l){4-5} \cmidrule(l){6-7}
        & \textsc{msc} & \textsc{fem}  & \textsc{msc} & \textsc{fem}  & \textsc{msc} & \textsc{fem} \\ \midrule
        \textsc{pos} &  0.34 & 0.38 & 0.37 & 0.36 & 0.37 & 0.36 \\
        \textsc{neg} &  0.30 & 0.31 & 0.33 & 0.34 & 0.34 & 0.35 \\
        \textsc{neu} &  \textbf{0.36} & \textbf{0.31} & 0.30 & 0.30 & 0.30 & 0.29 \\ \bottomrule
    \end{tabular}
    \end{adjustbox}
  \caption{The frequency with which the 200 largest-deviation
    neighbors for each gender correspond to each sentiment, obtained
    using a simplified version of our model and the lexicon of
    \newcite{hoyleSentiment2019}. Significant differences ($p <
    0.05/3$ under an unpaired permutation test with Bonferroni
    correction) are in bold.\looseness=-1}
    \label{tab:sentiment}
\end{table}

\subsection{Q3: Differences in \emph{sentiment}}

Our final research question concerns the overall sentiment of the
language used to describe men and women. To answer this question, we
use a simplified version of our model, without the latent sentiment
variables or the posterior regularizer. We are then able to use the
combined sentiment lexicon of \newcite{hoyleSentiment2019} to analyze
the largest-deviation neighbors for each gender by computing the
frequency with which each neighbor corresponds to each sentiment. We
report these frequencies in \cref{tab:sentiment}. We find that there
is only one significant difference: adjectives used to describe men
are more often neutral than those used to describe women.

\section{Conclusion and Limitations}\label{sec:limitations}

We presented an experimental framework for quantitatively studying the
ways in which the language used to describe men and women is different
and, moreover, different in a positive or negative way. We introduced
a generative latent-variable model that jointly represents adjective
(or verb) choice, with its sentiment, given the natural gender of a
head (or dependent) noun. Via our experiments, we found evidence in
support of common gender stereotypes. For example, positive adjectives
used to describe women are more often related to their bodies than
adjectives used to describe men. Our study has a few limitations that
we wish to highlight. First, we ignore demographics (e.g., age,
gender, location) of the speaker, even though such demographics are
likely influence word choice. Second, we ignore genre (e.g., news,
romance) of the text, even though genre is also likely to influence
the language used to describe men and women. In addition, depictions 
of men and women have certainly changed over the period covered by our corpus; indeed, \newcite{underwood2018transformation} found evidence of such a change for fictional characters.
In future work, we intend to conduct a diachronic analysis in English using the same corpus,
in addition to a cross-linguistic study of gendered language.\looseness=-1





\section*{Acknowledgments}
We would like to thank the three anonymous ACL 2019 reviewers for
their comments on the submitted version, as well as the anonymous
reviewers of a previous submission. We would also like to thank Adina
Williams and Eleanor Chodroff for their comments on versions of the manuscript. The last author would like to acknowledge a Facebook fellowship. 

\bibliography{naaclhlt2019}

\begin{thebibliography}{36}
\expandafter\ifx\csname natexlab\endcsname\relax\def\natexlab#1{#1}\fi

\bibitem[{Baker(2005)}]{baker2006public}
Paul Baker. 2005.
\newblock \emph{Public discourses of gay men}.
\newblock Routledge.

\bibitem[{Baker(2013)}]{baker2013introduction}
Paul Baker. 2013.
\newblock Introduction: {V}irtual special issue of gender and language on
  corpus approaches.
\newblock \emph{Gender and Language}, 1(1).

\bibitem[{Baker(2014)}]{baker2014using}
Paul Baker. 2014.
\newblock \emph{Using corpora to analyze gender}.
\newblock A\&C Black.

\bibitem[{Bamman et~al.(2014)Bamman, Eisenstein, and Schnoebelen}]{Bamman2014}
David Bamman, Jacob Eisenstein, and Tyler Schnoebelen. 2014.
\newblock Gender identity and lexical variation in social media.
\newblock \emph{Journal of Sociolinguistics}, 18(2):135--160.

\bibitem[{de~Beauvoir(1953)}]{de1953second}
Simone de~Beauvoir. 1953.
\newblock \emph{The Second Sex}.
\newblock Vintage Books.

\bibitem[{Blei et~al.(2003)Blei, Ng, and Jordan}]{blei2003latent}
David~M. Blei, Andrew~Y. Ng, and Michael~I. Jordan. 2003.
\newblock Latent {D}irichlet allocation.
\newblock \emph{Journal of Machine Learning Research}, 3(Jan):993--1022.

\bibitem[{Bolukbasi et~al.(2016)Bolukbasi, Chang, Zou, Saligrama, and
  Kalai}]{bolukbasi2016man}
Tolga Bolukbasi, Kai-Wei Chang, James~Y. Zou, Venkatesh Saligrama, and Adam~T.
  Kalai. 2016.
\newblock Man is to computer programmer as woman is to homemaker? {D}ebiasing
  word embeddings.
\newblock In \emph{Advances in Neural Information Processing Systems}, pages
  4349--4357.

\bibitem[{{Caldas-Coulthard} and
  Moon(2010)}]{caldas-coulthardCurvyHunkyKinky2010b}
Carmen~Rosa {Caldas-Coulthard} and Rosamund Moon. 2010.
\newblock \href {https://doi.org/10.1177/0957926509353843} {`{{Curvy}}, hunky,
  kinky': {{Using}} corpora as tools for critical analysis}.
\newblock \emph{Discourse \& Society}, 21(2):99--133.

\bibitem[{Eisenstein et~al.(2011)Eisenstein, Ahmed, and
  Xing}]{eisensteinSparseAdditiveGenerative2011}
Jacob Eisenstein, Amr Ahmed, and Eric~P. Xing. 2011.
\newblock \href {http://dl.acm.org/citation.cfm?id=3104482.3104613} {Sparse
  additive generative models of text}.
\newblock In \emph{Proceedings of the 28th International Conference on
  International Conference on Machine Learning}, ICML'11, pages 1041--1048,
  USA. Omnipress.

\bibitem[{Ganchev et~al.(2010)Ganchev, Gillenwater, Taskar
  et~al.}]{ganchev2010posterior}
Kuzman Ganchev, Jennifer Gillenwater, Ben Taskar, et~al. 2010.
\newblock Posterior regularization for structured latent variable models.
\newblock \emph{Journal of Machine Learning Research}, 11(Jul):2001--2049.

\bibitem[{Garg et~al.(2018)Garg, Schiebinger, Jurafsky, and Zou}]{garg2018word}
Nikhil Garg, Londa Schiebinger, Dan Jurafsky, and James Zou. 2018.
\newblock Word embeddings quantify 100 years of gender and ethnic stereotypes.
\newblock \emph{Proceedings of the National Academy of Sciences},
  115(16):E3635--E3644.

\bibitem[{Goldberg and Orwant(2013)}]{goldberg-orwant:2013:*SEM}
Yoav Goldberg and Jon Orwant. 2013.
\newblock \href {http://www.aclweb.org/anthology/S13-1035} {A dataset of
  syntactic-ngrams over time from a very large corpus of {E}nglish books}.
\newblock In \emph{Second Joint Conference on Lexical and Computational
  Semantics (*SEM), Volume 1: Proceedings of the Main Conference and the Shared
  Task: Semantic Textual Similarity}, pages 241--247, Atlanta, Georgia, USA.
  Association for Computational Linguistics.

\bibitem[{Good(2004)}]{good2004permutation}
Phillip~I. Good. 2004.
\newblock \emph{Permutation, parametric, and bootstrap tests of hypotheses}.
\newblock Springer.

\bibitem[{Gordon and Van~Durme(2013)}]{gordonReportingBiasKnowledge2013}
Jonathan Gordon and Benjamin Van~Durme. 2013.
\newblock \href {https://doi.org/10.1145/2509558.2509563} {Reporting {{Bias}}
  and {{Knowledge Acquisition}}}.
\newblock In \emph{Proceedings of the 2013 {{Workshop}} on {{Automated
  Knowledge Base Construction}}}, {{AKBC}} '13, pages 25--30, New York, NY,
  USA. {ACM}.

\bibitem[{Hoyle et~al.(2019)Hoyle, Wolf-Sonkin, Wallach, Cotterell, and
  Augenstein}]{hoyleSentiment2019}
Alexander Hoyle, Lawrence Wolf-Sonkin, Hanna Wallach, Ryan Cotterell, and
  Isabelle Augenstein. 2019.
\newblock Combining disparate sentiment lexica with a multi-view variational
  autoencoder.
\newblock In \emph{Proceedings of the 2019 Conference of the North American
  Chapter of the Association for Computational Linguistics: Human Language
  Technologies, Volume 2 (Short Papers)}. Association for Computational
  Linguistics.

\bibitem[{Kingma and Ba(2015)}]{Kingma2014AdamAM}
Diederik~P. Kingma and Jimmy Ba. 2015.
\newblock Adam: {A} method for stochastic optimization.
\newblock In \emph{International Conference on Learning Representations
  (ICLR)}.

\bibitem[{Lakoff(1973)}]{lakoff1973language}
Robin Lakoff. 1973.
\newblock Language and woman's place.
\newblock \emph{Language in Society}, 2(1):45--79.

\bibitem[{McKee and Sherriffs(1957)}]{mckee1957differential}
John~P. McKee and Alex~C. Sherriffs. 1957.
\newblock The differential evaluation of males and females.
\newblock \emph{Journal of Personality}, 25(3):356--371.

\bibitem[{Miller et~al.(1993)Miller, Leacock, Tengi, and
  Bunker}]{miller1993semantic}
George~A. Miller, Claudia Leacock, Randee Tengi, and Ross~T. Bunker. 1993.
\newblock A semantic concordance.
\newblock In \emph{Proceedings of the workshop on Human Language Technology
  (HLT)}, pages 303--308. Association for Computational Linguistics.

\bibitem[{{Moon, Rosamund}(2014)}]{moonrosamundGorgeousGrumpyAdjectives2014}
{Moon, Rosamund}. 2014.
\newblock From gorgeous to grumpy: Adjectives, age, and gender.
\newblock \emph{Gender and Language}, 8(1):5--41.

\bibitem[{Norberg(2016)}]{norbergNaughtyBoysSexy2016a}
Cathrine Norberg. 2016.
\newblock \href {https://doi.org/10.1177/0075424216665672} {Naughty {{Boys}}
  and {{Sexy Girls}}: {{The Representation}} of {{Young Individuals}} in a
  {{Web}}-{{Based Corpus}} of {{English}}}.
\newblock \emph{Journal of English Linguistics}, 44(4):291--317.

\bibitem[{O'Neil(2016)}]{o2016weapons}
Cathy O'Neil. 2016.
\newblock \emph{Weapons of math destruction: {H}ow big data increases
  inequality and threatens democracy}.
\newblock Broadway Books.

\bibitem[{Ott(2016)}]{ott2016tweet}
Margaret Ott. 2016.
\newblock Tweet like a girl: {C}orpus analysis of gendered language in social
  media.
\newblock Bachelor's thesis, Yale University.

\bibitem[{Pearce(2008)}]{pearceInvestigatingCollocationalBehaviour2008}
Michael Pearce. 2008.
\newblock \href {https://doi.org/10.3366/E174950320800004X} {Investigating the
  collocational behaviour of man and woman in the {{BNC}} using {{Sketch
  Engine}}}.
\newblock \emph{Corpora}, 3(1):1--29.

\bibitem[{Rubin et~al.(1974)Rubin, Provenzano, and Luria}]{rubin1974eye}
Jeffrey~Z. Rubin, Frank~J. Provenzano, and Zella Luria. 1974.
\newblock The eye of the beholder: {P}arents' views on sex of newborns.
\newblock \emph{American Journal of Orthopsychiatry}, 44(4):512.

\bibitem[{Rudinger et~al.(2017)Rudinger, May, and
  Van~Durme}]{rudinger2017social}
Rachel Rudinger, Chandler May, and Benjamin Van~Durme. 2017.
\newblock Social bias in elicited natural language inferences.
\newblock In \emph{Proceedings of the First ACL Workshop on Ethics in Natural
  Language Processing}, pages 74--79.

\bibitem[{Rudinger et~al.(2018)Rudinger, White, and Van~Durme}]{N18-1067}
Rachel Rudinger, Aaron~Steven White, and Benjamin Van~Durme. 2018.
\newblock \href {https://doi.org/10.18653/v1/N18-1067} {Neural models of
  factuality}.
\newblock In \emph{Proceedings of the 2018 Conference of the North American
  Chapter of the Association for Computational Linguistics: Human Language
  Technologies, Volume 1 (Long Papers)}, pages 731--744. Association for
  Computational Linguistics.

\bibitem[{Schofield and Mehr(2016)}]{schofield-mehr:2016:CLfL2016}
Alexandra Schofield and Leo Mehr. 2016.
\newblock \href {http://www.aclweb.org/anthology/W16-0204}
  {Gender-distinguishing features in film dialogue}.
\newblock In \emph{Proceedings of the Fifth Workshop on Computational
  Linguistics for Literature}, pages 32--39, San Diego, California, USA.
  Association for Computational Linguistics.

\bibitem[{Storage et~al.(2016)Storage, Horne, Cimpian, and
  Leslie}]{storage2016frequency}
Daniel Storage, Zachary Horne, Andrei Cimpian, and Sarah-Jane Leslie. 2016.
\newblock The frequency of ``brilliant'' and ``genius'' in teaching evaluations
  predicts the representation of women and {A}frican {A}mericans across fields.
\newblock \emph{PloS one}, 11(3):e0150194.

\bibitem[{Tsvetkov et~al.(2014)Tsvetkov, Schneider, Hovy, Bhatia, Faruqui, and
  Dyer}]{TSVETKOV14.1096}
Yulia Tsvetkov, Nathan Schneider, Dirk Hovy, Archna Bhatia, Manaal Faruqui, and
  Chris Dyer. 2014.
\newblock Augmenting {E}nglish adjective senses with supersenses.
\newblock In \emph{Proceedings of the Ninth International Conference on
  Language Resources and Evaluation (LREC'14)}, Reykjavik, Iceland. European
  Language Resources Association (ELRA).

\bibitem[{Underwood et~al.(2018)Underwood, Bamman, and
  Lee}]{underwood2018transformation}
Ted Underwood, David Bamman, and Sabrina Lee. 2018.
\newblock \href {https://doi.org/10.22148/16.019} {The transformation of gender
  in english-language fiction}.
\newblock \emph{Journal of Cultural Analytics}.

\bibitem[{Williams and Bennett(1975)}]{williams1975definition}
John~E. Williams and Susan~M. Bennett. 1975.
\newblock The definition of sex stereotypes via the adjective check list.
\newblock \emph{Sex Roles}, 1(4):327--337.

\bibitem[{Williams and Best(1977)}]{williamsSexStereotypesTrait1977}
John~E. Williams and Deborah~L. Best. 1977.
\newblock \href {https://doi.org/10.1177/001316447703700111} {Sex
  {{Stereotypes}} and {{Trait Favorability}} on the {{Adjective Check List}}}.
\newblock \emph{Educational and Psychological Measurement}, 37(1):101--110.

\bibitem[{Williams and Best(1990)}]{williams+best}
John~E. Williams and Deborah~L. Best. 1990.
\newblock \emph{Measuring sex stereotypes: a multination study}.
\newblock Newbury Park, Calif. : Sage.

\bibitem[{Zhao et~al.(2017)Zhao, Wang, Yatskar, Ordonez, and
  Chang}]{zhao-EtAl:2017:EMNLP20173}
Jieyu Zhao, Tianlu Wang, Mark Yatskar, Vicente Ordonez, and Kai-Wei Chang.
  2017.
\newblock \href {https://www.aclweb.org/anthology/D17-1323} {Men also like
  shopping: Reducing gender bias amplification using corpus-level constraints}.
\newblock In \emph{Proceedings of the 2017 Conference on Empirical Methods in
  Natural Language Processing}, pages 2979--2989, Copenhagen, Denmark.
  Association for Computational Linguistics.

\bibitem[{Zhao et~al.(2018)Zhao, Wang, Yatskar, Ordonez, and Chang}]{N18-2003}
Jieyu Zhao, Tianlu Wang, Mark Yatskar, Vicente Ordonez, and Kai-Wei Chang.
  2018.
\newblock \href {https://doi.org/10.18653/v1/N18-2003} {Gender bias in
  coreference resolution: {E}valuation and debiasing methods}.
\newblock In \emph{Proceedings of the 2018 Conference of the North American
  Chapter of the Association for Computational Linguistics: Human Language
  Technologies, Volume 2 (Short Papers)}, pages 15--20. Association for
  Computational Linguistics.

\end{thebibliography}
\bibliographystyle{acl_natbib}

\clearpage
\appendix

\section{List of Gendered, Animate Nouns}\label{app:gendered-nouns}
\cref{tab:noun-list} contains the full list of gendered, animate nouns
that we use. We consider each row in this table to be the inflected
forms of a single lemma.

\begin{table}[ht]
  \centering
  \begin{adjustbox}{width=\columnwidth}
  \begin{tabular}{l l l l}
  \toprule
  \multicolumn{2}{c}{Male}&  \multicolumn{2}{c}{Female} \\
  Singular &  Plural & Singular & Plural \\
  \midrule
    man & men & woman & women\\
    boy & boys & girl & girls\\
    father & fathers & mother & mothers\\
    son & sons & daughter & daughters\\
    brother & brothers & sister & sisters\\
    husband & husbands & wife & wives\\
    uncle & uncles & aunt & aunts\\
    nephew & nephews & niece & nieces\\
    emperor & emperors & empress & empresses\\
    king & kings & queen & queens \\
    prince & princes & princess & princesses\\
    duke & dukes & duchess & duchesses\\
    lord & lords & lady & ladies\\
    knight & knights & dame & dames\\
    waiter & waiters & waitress & waitresses\\
    actor & actors & actress & actresses\\
    god & gods & goddess & goddesses\\
    policeman & policemen & policewoman & policewomen\\
    postman & postmen & postwoman & postwomen\\
    hero & heros & heroine & heroines\\
    wizard & wizards & witch & witches\\
    steward & stewards & stewardess & stewardesses\\
    he & -- & she & --\\
  \bottomrule
  \end{tabular}
  \end{adjustbox}
  \caption{Gendered, animate nouns.}\label{tab:noun-list}
\end{table}



\section{Relationship to PMI}\label{sec:pmi}
\setcounter{prop}{0}
\begin{prop}
Consider the following restricted version of our model. Let $\vf_{g}
\in \{0, 1\}^2$ be a one-hot vector that represents only the gender of
a noun. We write $g$ instead of $n$, equivalence-classing
all nouns as either \textsc{masc} or \textsc{fem}.  Let
$\veta^\star(\cdot) : \calV \rightarrow \mathbb{R}^2$ be the
maximum-likelihood estimate for the special case of our model without
(latent) sentiments:
\begin{equation}
    p(\nu \,|\, g) \propto \exp(m_\nu + \vf_g^{\top}\veta^\star(\nu)).
\end{equation}
Then, we have
\begin{align}
   \tau_g(\nu) \propto \exp(\text{PMI}(\nu, g)).
\end{align}
\end{prop}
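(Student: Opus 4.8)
The plan is to reduce everything to the observation that, in this restricted model, the conditional $p(\nu \mid g)$ is a saturated (fully flexible) categorical distribution, so that its maximum-likelihood estimate simply reproduces the empirical conditional $\hat{p}(\nu \mid g)$. First I would note that, because $\vf_g$ (and likewise the ranking vector $\vg_g$ of \cref{eqn:topic}) is one-hot in the gender coordinate, both $\vf_g^{\top}\veta^\star(\nu)$ and the score exponent collapse to the single scalar component $\eta_g^\star(\nu)$. Hence $\tau_g(\nu) \propto \exp(\eta_g^\star(\nu))$, and the whole task becomes identifying $\eta_g^\star(\nu)$ with $\text{PMI}(\nu, g)$ up to an additive term that may depend on $g$ but not on $\nu$ (which is exactly the freedom permitted by a proportionality over $\nu$).

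Next I would compute the estimate. Writing $Z_g = \sum_{\nu'} \exp(m_{\nu'} + \eta_g(\nu'))$, the no-sentiment, unregularized objective of \cref{eq:obj} factors as $\log p(\nu, g) = \log p(\nu \mid g) + \log p(g)$ and decouples across genders; for each $g$ it is a cross-entropy that is maximized precisely when the softmax over $\nu$ matches the data, i.e. $p(\nu \mid g) = \hat{p}(\nu \mid g)$. Solving the model equation $\hat{p}(\nu \mid g) = \exp(m_\nu + \eta_g^\star(\nu)) / Z_g$ for the deviation then gives $\eta_g^\star(\nu) = \log \hat{p}(\nu \mid g) - m_\nu + \log Z_g$.

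The key remaining step is to pin down $m_\nu$. Taking $\vm$ to be the SAGE background, namely the empirical unigram log-probabilities $m_\nu = \log \hat{p}(\nu) + \text{const}$, and substituting, the $-m_\nu$ term converts $\log \hat{p}(\nu \mid g)$ into $\log \hat{p}(\nu \mid g) - \log \hat{p}(\nu) = \text{PMI}(\nu, g)$, while the leftover $\nu$-independent pieces ($\log Z_g$ together with the background constant) collect into a single $g$-dependent constant. Exponentiating and folding that constant into the proportionality over $\nu$ yields $\tau_g(\nu) \propto \exp(\text{PMI}(\nu, g))$.

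I expect the main obstacle to be the gauge freedom between $\vm$ and $\veta^\star$: the likelihood depends only on the sum $m_\nu + \eta_g(\nu)$, so the split between background and deviation is not identifiable without a convention, and the PMI interpretation holds only for the choice that sets $\vm$ to the marginal. The clean resolution is to make the SAGE background convention explicit and to verify that, under it, the marginal induced by the fitted model indeed equals $\hat{p}(\nu)$, so that the $-m_\nu$ substitution legitimately supplies the denominator of the PMI. Everything else is the routine softmax-matching argument sketched above.
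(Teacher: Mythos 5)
Your proposal is correct and takes essentially the same route as the paper's proof: both rest on the saturated model reproducing the empirical conditional exactly ($p(\nu \mid g) = \hat{p}(\nu \mid g)$), on the SAGE background convention $m_\nu = \log \hat{p}(\nu)$, and on absorbing the $g$-dependent normalizer $\log Z_g$ into the proportionality over $\nu$. The only differences are presentational — you solve for $\eta_g^\star(\nu)$ and substitute, where the paper manipulates $\mathrm{PMI}(\nu, g)$ directly — and your explicit treatment of the gauge freedom between $\vm$ and $\veta^\star$ surfaces an assumption the paper's proof uses silently in the step $\log\bigl(p(\nu \mid g)/\hat{p}(\nu)\bigr) = \log\bigl(p(\nu \mid g)/\exp\{m_\nu\}\bigr)$.
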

\begin{proof}
First, we note our model has enough parameters to fit
the empirical distribution exactly:
\begin{align}
    \hat{p}(\nu \mid g) &= p(\nu \mid g) \\
                        &\propto \exp\{m_\nu + \vf_g^{\top} \veta^\star(\nu)\}.
\end{align}
Then, we proceed with an algebraic manipulation of the definition of
pointwise mutual information:
\begin{align}
    \textit{PMI}(\nu, g) &= \log \frac{\hat{p}(\nu, n)}{\hat{p}(\nu)\, \hat{p}(n)} \\
    &= \log \frac{\hat{p}(\nu \mid n)}{\hat{p}(\nu)} \\
    &= \log \frac{p(\nu \mid n)}{\hat{p}(\nu)} \\
    &= \log \frac{p(\nu \mid n)}{\exp\{m_\nu\}} \\
    &= \log \frac{1}{Z}\frac{\exp\{m_\nu + \vf_g^{\top} \veta^\star(\nu)\}}{\exp\{m_\nu\}} \\
     &= \log \frac{1}{Z}\exp\{\vf_g^{\top} \veta^\star(\nu)\} \\
      &= \vf_g^{\top} \veta^\star(\nu) - \log Z.
\end{align}
Now we have
\begin{align}
    \tau_g(\nu) &\propto \exp\{f_g^{\top} \veta^\star(\nu)\} \\
           &\propto \exp\{f_g^{\top} \veta^\star(\nu) - \log Z \}  \\
           &= \exp(\textit{PMI}(\nu, g)),
\end{align}
which is what we wanted to show.
\end{proof}

\section{Senses}\label{app:super-sense-categories}
In \cref{tab:super-sense-categories}, we list the senses for
adjectives \cite{TSVETKOV14.1096} and for
verbs~\cite{miller1993semantic}.
\begin{table}[ht]
  \centering
  \small
  \begin{tabular}{l l}
  \toprule
  Adjectives & Verbs \\
  \midrule
    Behavior & Body \\
    Body & Change \\
    Feeling & Cognition \\
    Mind & Communication \\
    Miscellaneous & Competition \\
    Motion & Consumption \\
    Perception & Contact \\
    Quantity & Creation \\
    Social & Emotion \\
    Spatial & Motion \\
    Substance & Perception \\
    Temporal & Possession \\
    Weather & Social \\
     & Stative \\
     & Weather \\
  \bottomrule
  \end{tabular}
  \caption{Senses for adjectives and verbs.}\label{tab:super-sense-categories}
\end{table}

\section{Additional Results}\label{sec:output}

In \cref{tab:nsubj-results} and \cref{tab:dobj-results}, we provide
the largest-deviation verbs used to describe male and female nouns for
\textsc{nsubj}--verb pairs and \textsc{dobj}--verb pairs.

\begin{table*}[ht]
  \centering
  \begin{adjustbox}{width=\textwidth}
  \begin{tabular}{lrlrlr|lrlrlr}
  \toprule
  \multicolumn{2}{c}{$\tau_\textsc{masc-pos}$}&  \multicolumn{2}{c}{$\tau_\textsc{masc-neg}$}& \multicolumn{2}{c|}{$\tau_\textsc{masc-neu}$}& \multicolumn{2}{c}{$\tau_\textsc{fem-pos}$}&  \multicolumn{2}{c}{$\tau_\textsc{fem-neg}$}& \multicolumn{2}{c}{$\tau_\textsc{fem-neu}$} \\
  Verb &  Value & Verb & Value & Verb & Value & Verb &  Value &
  Verb & Value & Verb & Value\\
  \midrule
       succeed &              1.6 &          fight &              1.2 &         extend &              0.7 &      celebrate &             2.4 &     persecute &             2.1 &         faint &             0.7 \\
       protect &              1.4 &           fail &              1.0 &          found &              0.8 &      fascinate &             0.8 &         faint &             1.0 &            be &             1.1 \\
         favor &              1.3 &           fear &              1.0 &         strike &              1.3 &     facilitate &             0.7 &           fly &             1.0 &            go &             0.4 \\
      flourish &              1.3 &         murder &              1.5 &            own &              1.1 &          marry &             1.8 &          weep &             2.3 &          find &             0.1 \\
       prosper &              1.7 &          shock &              1.6 &        collect &              1.1 &          smile &             1.8 &          harm &             2.2 &           fly &             0.4 \\
       support &              1.5 &          blind &              1.6 &            set &              0.8 &            fan &             0.8 &          wear &             2.0 &          fall &             0.1 \\
       promise &              1.5 &         forbid &              1.5 &            wag &              1.0 &           kiss &             1.8 &         mourn &             1.7 &          wear &             0.9 \\
       welcome &              1.5 &           kill &              1.3 &        present &              0.9 &       champion &             2.2 &          gasp &             1.1 &         leave &             0.7 \\
        favour &              1.2 &        protest &              1.3 &        pretend &              1.1 &          adore &             2.0 &       fatigue &             0.7 &          fell &             0.1 \\
         clear &              1.9 &          cheat &              1.3 &      prostrate &              1.1 &          dance &             1.7 &         scold &             1.8 &        vanish &             1.3 \\
        reward &              1.8 &           fake &              0.8 &           want &              0.9 &          laugh &             1.6 &        scream &             2.1 &          come &             0.7 \\
        appeal &              1.6 &        deprive &              1.5 &         create &              0.9 &           have &             1.4 &       confess &             1.7 &     fertilize &             0.6 \\
     encourage &              1.5 &       threaten &              1.3 &            pay &              1.1 &           play &             1.0 &           get &             0.5 &         flush &             0.5 \\
         allow &              1.5 &      frustrate &              0.9 &         prompt &              1.0 &           give &             0.8 &        gossip &             2.0 &          spin &             1.6 \\
       respect &              1.5 &         fright &              0.9 &         brazen &              1.0 &           like &             1.8 &         worry &             1.8 &         dress &             1.4 \\
       comfort &              1.4 &         temper &              1.4 &          tarry &              0.7 &         giggle &             1.4 &            be &             1.3 &          fill &             0.2 \\
         treat &              1.3 &        horrify &              1.4 &          front &              0.5 &          extol &             0.6 &          fail &             0.4 &           fee &             0.2 \\
         brave &              1.7 &        neglect &              1.4 &          flush &              0.3 &  compassionate &             1.9 &         fight &             0.4 &        extend &             0.1 \\
        rescue &              1.5 &          argue &              1.3 &          reach &              0.9 &           live &             1.4 &          fake &             0.3 &         sniff &             1.6 \\
           win &              1.5 &       denounce &              1.3 &         escape &              0.8 &           free &             0.9 &       overrun &             2.4 &     celebrate &             1.1 \\
          warm &              1.5 &        concern &              1.2 &             gi &              0.7 &     felicitate &             0.6 &          hurt &             1.8 &          clap &             1.1 \\
        praise &              1.4 &          expel &              1.7 &           rush &              0.6 &         mature &             2.2 &      complain &             1.7 &        appear &             0.9 \\
           fit &              1.4 &        dispute &              1.5 &      duplicate &              0.5 &          exalt &             1.7 &        lament &             1.5 &            gi &             0.8 \\
          wish &              1.4 &        obscure &              1.4 &      incarnate &              0.5 &        surpass &             1.7 &     fertilize &             0.5 &          have &             0.5 \\
         grant &              1.3 &           damn &              1.4 &         freeze &              0.5 &           meet &             1.1 &         feign &             0.5 &         front &             0.5 \\
  \bottomrule
  \end{tabular}
  \end{adjustbox}
  \caption{The largest-deviation verbs used to describe male and
    female nouns for \textsc{nsubj}--verb
    pairs.}\label{tab:nsubj-results}
\end{table*}

\begin{table*}[ht]
  \centering
  \begin{adjustbox}{width=\textwidth}
  \begin{tabular}{lrlrlr|lrlrlr}
  \toprule
  \multicolumn{2}{c}{$\tau_\textsc{masc-pos}$}&  \multicolumn{2}{c}{$\tau_\textsc{masc-neg}$}& \multicolumn{2}{c|}{$\tau_\textsc{masc-neu}$}& \multicolumn{2}{c}{$\tau_\textsc{fem-pos}$}&  \multicolumn{2}{c}{$\tau_\textsc{fem-neg}$}& \multicolumn{2}{c}{$\tau_\textsc{fem-neu}$} \\
  Verb &  Value & Verb & Value & Verb & Value & Verb &  Value & Verb &
  Value & Verb & Value\\
  \midrule
        praise &   1.7 &          fight &   1.8 &            set &   1.5 &         marry &  2.3 &        forbid &  1.3 &          have &  1.0 \\
         thank &   1.7 &          expel &   1.8 &            pay &   1.2 &        assure &  3.4 &         shame &  2.5 &        expose &  0.8 \\
       succeed &   1.7 &           fear &   1.6 &         escape &   0.4 &        escort &  1.2 &        escort &  1.3 &        escort &  1.4 \\
         exalt &   1.2 &         defeat &   2.4 &            use &   2.1 &       exclaim &  1.0 &       exploit &  0.9 &          pour &  2.1 \\
        reward &   1.8 &           fail &   1.3 &          expel &   0.9 &          play &  2.7 &          drag &  2.1 &         marry &  1.3 \\
       commend &   1.7 &          bribe &   1.8 &         summon &   1.7 &          pour &  2.6 &        suffer &  2.2 &          take &  1.1 \\
           fit &   1.4 &           kill &   1.6 &          speak &   1.3 &        create &  2.0 &         shock &  2.1 &        assure &  1.6 \\
       glorify &   2.0 &           deny &   1.5 &           shop &   2.6 &          have &  1.8 &        fright &  2.4 &     fertilize &  1.6 \\
         honor &   1.6 &         murder &   1.7 &  excommunicate &   1.3 &     fertilize &  1.8 &         steal &  2.0 &           ask &  1.0 \\
       welcome &   1.9 &         depose &   2.3 &         direct &   1.1 &           eye &  0.9 &        insult &  1.8 &       exclaim &  0.6 \\
        gentle &   1.8 &         summon &   2.0 &          await &   0.9 &           woo &  3.3 &     fertilize &  1.6 &         strut &  2.3 \\
       inspire &   1.7 &          order &   1.9 &          equal &   0.4 &         strut &  3.1 &       violate &  2.4 &          burn &  1.7 \\
        enrich &   1.7 &       denounce &   1.7 &        appoint &   1.7 &          kiss &  2.6 &         tease &  2.3 &          rear &  1.5 \\
        uphold &   1.5 &        deprive &   1.6 &        animate &   1.1 &       protect &  2.1 &       terrify &  2.1 &       feature &  0.9 \\
       appease &   1.5 &           mock &   1.6 &         follow &   0.7 &           win &  2.0 &     persecute &  2.1 &         visit &  1.3 \\
          join &   1.4 &        destroy &   1.5 &         depose &   1.8 &         excel &  1.6 &           cry &  1.8 &           saw &  1.3 \\
  congratulate &   1.3 &        deceive &   1.7 &           want &   1.1 &         treat &  2.3 &        expose &  1.3 &      exchange &  0.8 \\
         extol &   1.1 &           bore &   1.6 &          reach &   0.9 &          like &  2.2 &          burn &  2.6 &         shame &  1.6 \\
       respect &   1.7 &          bully &   1.5 &          found &   0.8 &     entertain &  2.0 &         scare &  2.0 &          fade &  1.2 \\
         brave &   1.7 &         enrage &   1.4 &         exempt &   0.4 &       espouse &  1.4 &      frighten &  1.8 &        signal &  1.2 \\
         greet &   1.6 &           shop &   2.7 &            tip &   1.8 &       feature &  1.2 &      distract &  2.3 &           see &  1.2 \\
       restore &   1.5 &          elect &   2.2 &          elect &   1.7 &          meet &  2.2 &          weep &  2.3 &       present &  1.0 \\
         clear &   1.5 &         compel &   2.1 &         unmake &   1.5 &          wish &  1.9 &        scream &  2.3 &         leave &  0.8 \\
        excite &   1.2 &         offend &   1.5 &          fight &   1.2 &        fondle &  1.9 &         drown &  2.1 &       espouse &  1.3 \\
       flatter &   0.9 &          scold &   1.4 &        prevent &   1.1 &           saw &  1.8 &          rape &  2.0 &          want &  1.1 \\
  \bottomrule
  \end{tabular}
  \end{adjustbox}
  \caption{The largest-deviation verbs used to describe male and
    female nouns for \textsc{dobj}--verb
    pairs.}\label{tab:dobj-results}
\end{table*}

\end{document}